\DeclareSymbolFontAlphabet{\amsmathbb}{AMSb}%
\pgfplotsset{compat=newest}
\DeclareMathOperator*{\argmax}{argmax}
\DeclareMathOperator{\simstate}{sim}
\newtheorem{proposition}{Proposition}[section]
\newenvironment{proof}{{\it Proof}}{\hfill $\Box$}
\def\updgamma{\textsc{updateTrans}}
\def\updf{\textsc{updatePerc}}
\def\append{\textsc{append}}
\def\explore{\textsc{explore}}
\def\extenddom{\textsc{extendDom}}
\def\extendf{\textsc{extendF}}
\def\R{\mathbb{R}}
\def\N{\mathcal{N}}
\def\bX{\pmb{X}}
\def\bx{\pmb{x}}
\def\Tr{{T}}
\def\Obs{{O}}
\def\prm{\pmb{\theta}}
\newtheorem{definition}{Definition}
\newtheorem{example}{Example}
\def\PD{\mathcal{D}}
\def\PDdef{\left<S,A,\gamma\right>}
\def\dom{\pmb{Dom}}
\def\bV{\pmb{V}\!}
\def\bv{\pmb{v}}
\def\bW{\pmb{W}}
\def\bw{\pmb{w}}
\def\bD{\pmb{D}}
\def\D{D}
\def\rooms{\mathsf{room}}
\def\nrofobj{\mathsf{nr\_of\_carried\_objects}}
\def\aboveThreshold{\textsc{aboveThreshold}}
\def\oneof{\textsc{oneOf}}
\def\act{\textsc{act}}
\def\maxiter{\textsc{maxIter}}
\def\iter{\textsc{iter}}
  \def\loc{\mathsf{loc}}
  \def\loaded{\mathsf{loaded}}  
  \def\robot{\mathsf{r}}
  \def\pack{\mathsf{p}}
  \def\north{\mathsf{N}}
  \def\south{\mathsf{S}}
  \def\west{\mathsf{W}}
  \def\east{\mathsf{E}}
  \def\load{\mathsf{L}}
  \def\unload{\mathsf{U}}
\def\action#1{\stackrel{#1}{\longrightarrow}}
\tikzset{declare function={bellshape(\x,\mu,\sigma)=exp(-(\x-\mu)^2/(2*\sigma^2));}}
\tikzset{declare function={normal(\x,\mu,\sigma)=1/(2.5066283*\sigma)*bellshape(\x,\mu,\sigma);}}
\tikzset{declare function={%
    gamma(\z) =%
    (2.506628274631*sqrt(1/\z)+0.20888568*(1/\z)^(1.5)+%
    0.00870357*(1/\z)^(2.5)-(174.2106599*(1/\z)^(3.5))/25920-%
    (715.6423511*(1/\z)^(4.5))/1244160)*exp((-ln(1/\z)-1)*\z);}}
\tikzset{declare function={gammapdf(\x,\k,\theta) = \x^(\k-1)*exp(-\x/\theta) / (\theta^\k*gamma(\k));}}
\tikzset{declare function={beta(\a,\b)=gamma(\a)*gamma(\b)/gamma(\a+\b);}}
\tikzset{declare function={betapdf(\x,\a,\b)=\x^(\a-1)*(1-\x)^(\b-1)/beta(\a,\b);}}
\def\nofsamples{100}
\def\PAL{\textsc{ALP}}
\def\prem{\mathrm{prem}}
\def\cat#1{\includegraphics[width=#1\textwidth]{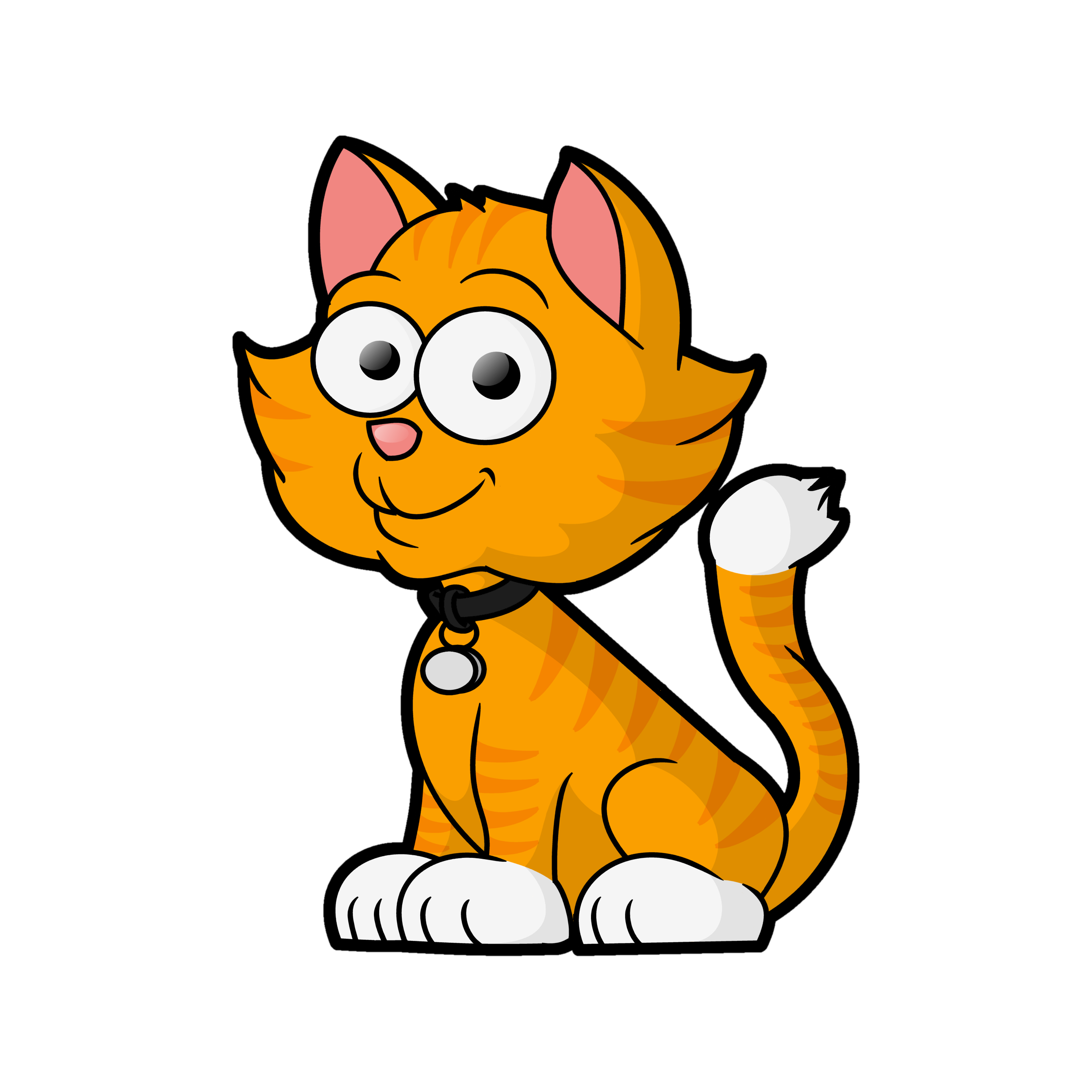}}
\def\citeauthor#1{\cite{#1}}
\title{
Incremental Learning of Discrete Planning Domains from Continuous Perceptions}
\author{
    Luciano Serafini and Paolo Traverso \\
    Fondazione Bruno Kessler \\
    \texttt{\{serafini,traverso\}@fbk.eu}
}
\begin{document}

\maketitle

\begin{abstract}
We propose a framework for learning discrete deterministic planning domains. In this framework, an agent learns the domain by observing the action effects through continuous features that describe the state of the environment after the execution of each action.  Besides, the agent learns its \emph{perception function}, i.e., a probabilistic mapping between state variables and sensor data represented as a vector of continuous random variables called \emph{perception variables}. We define an algorithm that updates the planning domain and the perception function by (i) introducing new states, either by extending the possible values of state variables, or by weakening their constraints; (ii) adapts the perception function to fit the observed data (iii) adapts the transition function on the basis of the executed actions and the effects observed via the perception function.  The framework is able to deal with exogenous events that happen in the environment.


\end{abstract}

\section{Introduction and Motivations}

Automated Planning methods and techniques rely on models of the world, usually called Planning Domains. The (automated) acquisition of these models is widely recognised as a challenging bottleneck, see, e.g., the KEPS workshops and the ICKEPS competition.\footnote{The Knowledge Engineering for Planinng and Scheduling (KEPS) Workshop and Competition (ICKEPS)}
The automated learning of planning domains is a way to address this challenge.
Indeed,
most often, it is impossible to specify a complete and correct model of the world. Moreover, most of the times a model needs to be updated and adapted 
to a changing environment. 

Several and different learning approaches have been proposed so far. 
Some works on domain model acquisition focus on the 
problem of learning action schema, see,
e.g. \cite{gregory2016,mccluskey2009,cresswell2013,mourao2012,mehta2011,zhuo2014}.
Learning planning operators and domain models from plan examples and solution traces \cite{yang2007,zhuo2010,zhuo2013,henaff2017}
and learning probabilistic planning operators have also been investigated \cite{pasula2004,zettlemoyer2005,pasula2007}. 
%

%
We propose a framework in which a discrete deterministic planning domain is extended with a perception function, i.e., a probabilistic mapping between state variables and observations from the real world represented by continuous variables, called perception variables. 
The perception function is represented by a conditional probability
distribution that computes the likelihood of observing some values of the perception variables given an assignment to state variables. 

We define an algorithm that builds an abstract deterministic finite planning domain and a perception function by executing actions and observing the effects through perception variables.  
The only information about the real world that is available to the learning
algorithm is provided by the perceptions variables. The
algorithm does not have access to a continuous model of the dynamics
of the world. In several cases, such model is not available or is too
difficult to provide. 
%

The learning algorithm can start either ``from scratch'' (i.e., with
an ``empty planning domain''), or from some prior knowledge 
expressed with an initial discrete planning domain and perception
function. The algorithm incrementally learns the values of the state
variables, the description of the transition function, the constraints
on state variables, and the perception function. The framework
provides the ability to learn and adapt to unexpected situations,
i.e., some constraints on state variables have been violated, or the
domain of some state variables should be extended with new values.
%

The paper is structured as follows.
Section~\ref{sec:model} formalises the planning domain, including the perception function.
Section~\ref{sec:learning} defines the incremental learning algorithm.
In Section~\ref{sec:example} we show how the algorithm works with an explanatory example that shows the potentialities of the framework.
We finally discuss related work, conclusions, and future work.


\section{Perceived Planning Domains}
\label{sec:model}
%
A \emph{(deterministic) planning domain}
is a triple $\PD=\left<S,A,\gamma\right>$, composed of a
finite non empty set of states $S$, a finite non empty set of actions $A$, and
a state transition function $\gamma: S \times A \rightarrow S$.
%
%
Each state $s \in S$ is represented 
with a vector of \emph{state variables} ranging over a finite set of values.
Let $\bV=\left<V_1,\dots,V_m\right>$ be a vector of $m$ state
variables.  Let $\bD=\{\D_1,\dots,\D_k\}$ be a set of non empty finite
sets, called \emph{domains}.  Let $\dom$ be a function that assigns a
domain $\dom(V)$ to each variable $V$ of $\bV$. The set
$\dom(V)$ is the set of values that can be assigned to the variable
$V$.  For every $\bW\subseteq\bV$, 
we use $\dom(\bW)$ to denote
the cross product of the domains of all the variables in $\bW$, namely
$\dom(\bW)=\bigtimes\limits_{V\in\bW}\dom(V)$.  For every
$\bw\in\dom(\bW)$, we use $\bW=\bw$ to denote the \emph{(partial)
  assignment} to each variable $V\in\bW$ to $v\in\dom(V)$.  
If $\bW$ is the entire set of variables $\bV$ then $\bV=\bv$ is a \emph{total assignment}. 
A state $s \in S$ is a
total assignment, i.e., a set of assignments that assigns a value
$v\in\dom(V)$ to every state variable $V$. We use $s[V]$ to
denote the value assigned by $s$ to $V$. Not every total
assignment necessarily corresponds to a state. 
%
%
The set of states $S$ of a planning domain is a subset of the total
assignments. 
$S$ can be specified with a set of \emph{constraints} between values
of state variables. For instance, the fact that $V$ and
$V'$ must take different values can be represented by the constraint
$V \neq V'$.
In this paper we suppose that constraints are expressed using
propositional combination (via $\wedge$, $\vee$ and $\neg$) of the 
atomic proposition $V=v$, and $V=V'$, for $V,V'\in\bV$ and
$v\in\dom(V)$. 


We assume that the transition function $\gamma$ is 
specified  with action language, resulting in a compact
representation. In this paper we adopt a simple 
action language, which specifies $\gamma$ through
a set of rules of the form 
\def\prem{\mathit{prec}}
\def\eff{\mathit{eff}}

\begin{equation}
\label{eq:prec-eff}
r:\ \prem(r)\action{a}\eff(r)
\end{equation}
where 
$a \in A$, $\prem(r)$ is a propositional formula in the language of
the constraints, and $\eff(r)$ is a partial assignment  $\bV'=\bv'$.
For every action $a$ and state $s$, $s'=\gamma(a,s)$ is the state
obtained after the execution of $a$ in $s$, and is defined as
$$
s'[V] =
\begin{cases}
  v & \parbox[t]{.3\textwidth}{if $\exists r$ for $a$, such that $s\models\prem(r)$ and $\eff(r)$ contains $V=v$} \\
  s[V] & \mbox{otherwise}
\end{cases}
$$
In order to guarantee that $\gamma(a,s)$ is deterministic, we
impose that for every pair of rules $r$ and $r'$, defining the
action $a$, we have that if $\prem(r) \wedge \prem(r')$ is consistent
then $\eff(r)\cup\eff(r')$ does not contain $V=v_1$ and $V=v_2$ for
$v_1\neq v_2$.

The agent perceives the world through a 
vector $\bX=\left<X_1,\dots,X_n\right>$ of continuous variables
ranging over real numbers, called \emph{perception variables}.
A \emph{perception function}, is a function
$f:\R^n\times\dom(\bV)\rightarrow R^+$, such that for every $\bx\in\R^n$ and
total assignment $\bV=\bv$, $f(\bx,\bv) = p(\bx\mid\bV=\bv)$,
where 
$p(\bx\mid\bV=\bv)$ is a probability density funciton (PDF) 
%
%
%
%
%
%
%
that can be factorised as follows:
$$
p(\bx\mid\bV=\bv) = \prod_{i=1}^n p_{X_i}(x_i|\bV_{J_i}=\bv_{j_i})
$$
where $\bV_{J_i}$ is a subset of the state variables $\bV$.

%
%
\begin{definition}[Extended planning domain]
  An \emph{extended planning domain} is a pair $\langle\PD,f\rangle$
  where $\PD$ is a planning domain and $f$ a perception function on the
  states of $\PD$. 
\end{definition}
Hereafter, if not explicitly specified, with ``planning domain'' we
will refer to extended planning domain.
\begin{example}[The ``Robot-Pack-Cat (RPC) Flat'']
  \label{ex:simple}
  The RPC-Flat is composed of 6 rooms (named from A to F), see 
  Figure~\ref{fig:example-planning-domain-states}. In this flat there
  are a robot, a pack, and a cat. The robot can move from one room to
  adjacent rooms, load, transport and unload the pack.  The cat moves
  around randomly and can also jump on top of the robot.  The robot is
  equipped with an RFID reader able to perceive the presence in the
  room of the pack, which is equipped with a proximity sensor tag.
  \begin{figure}[h]
    \begin{center}        \def\scale{1.2}
      \begin{tabular}{cc}
  \begin{tikzpicture}[scale=\scale]\scriptsize
    \draw[dashed] (0,0) grid (3,2);
    \draw[thick] (0,0) rectangle (3,2);    
     \node at (0.2,0.2) {A};
     \node at (1.2,0.2) {B};
     \node at (2.2,0.2) {C}; 
     \node at (0.2,1.2) {D};
     \node at (1.2,1.2) {E};
     \node at (2.2,1.2) {F};
    \node[draw,circle,fill=red] at (0.5,0.5) {R};
    \node[draw,rectangle,fill=green] at (2.75,0.75) {P};
    \node at (0.5,1.5) {\cat{.06}};
  \end{tikzpicture}& 
  \begin{tikzpicture}[scale=\scale]\scriptsize
    \draw[dashed] (0,0) grid (3,2);
    \draw[thick] (0,0) rectangle (3,2);    
     \node at (0.2,0.2) {A};
     \node at (1.2,0.2) {B};
     \node at (2.2,0.2) {C}; 
     \node at (0.2,1.2) {D};
     \node at (1.2,1.2) {E};
     \node at (2.2,1.2) {F};
    \node[draw,circle,fill=red] at (2.5,0.5) {R};
    \node[draw,rectangle,fill=green] at (2.75,0.75) {P};
    \node at (1.5,1.5) {\cat{.06}};
  \end{tikzpicture} \\
  \begin{tikzpicture}[scale=\scale]\scriptsize
    \draw[dashed] (0,0) grid (3,2);
    \draw[thick] (0,0) rectangle (3,2);    
     \node at (0.2,0.2) {A};
     \node at (1.2,0.2) {B};
     \node at (2.2,0.2) {C}; 
     \node at (0.2,1.2) {D};
     \node at (1.2,1.2) {E};
     \node at (2.2,1.2) {F};
    \node[draw,circle,fill=red] at (1.5,0.5) {R};
    \node[draw,rectangle,fill=green] at (1.5,0.5) {P};
    \node at (1.5,1.6) {\cat{.06}};
  \end{tikzpicture} & 
  \begin{tikzpicture}[scale=\scale]\scriptsize
    \draw[dashed] (0,0) grid (3,2);
    \draw[thick] (0,0) rectangle (3,2);    
     \node at (0.2,0.2) {A};
     \node at (1.2,0.2) {B};
     \node at (2.2,0.2) {C}; 
     \node at (0.2,1.2) {D};
     \node at (1.2,1.2) {E};
     \node at (2.2,1.2) {F};
    \node[draw,circle,fill=red] at (1.5,0.5) {R};
    \node[draw,rectangle,fill=green] at (2.5,0.5) {P};
    \node at (1.5,0.6) {\cat{.06}};
  \end{tikzpicture}
    \end{tabular}
  \end{center}
  \caption{\label{fig:example-planning-domain-states} Four possible
    situations in the RPC-flat}
\end{figure}
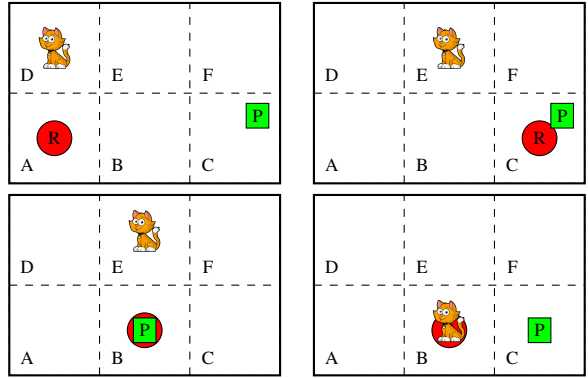
Suppose that the robot has only partial knowledge about the flat and
its dynamics. It believes that there are only 4 rooms (ignoring the
room C and F), it ignores also the presence of the cat. The robot
represents its partial knowledge with the following planning domain:
The states are represented by three state variables: $\loc(\robot)$,
$\loc(\pack)$, and $\loaded$, which represent the position of the
robot, the position of the pack, and whether the robot is loaded.
There are two domains i.e., $\bD=\{\rooms,\nrofobj\}$ where
$\rooms= \{0,1,2,3\}$ and $\nrofobj=\{0,1\}$, with
$\dom(\loc(\robot))=\rooms$, $\dom(\loc(\pack))=\rooms$, and
$\dom(\loaded) = \nrofobj$. Notice that the robot assumes that there
are only 4 rooms and 1 object to be carried. 

Not all the state variable assignments are states (in $S$),
indeed, when the robot is carrying the pack, their position must be the
same. This can be formalized by the constraint: 
  \begin{align}
    \label{eq:loaded-implies-same-location}
\loaded = 1 \rightarrow \loc(\robot)=\loc(\pack)
\end{align}
%
The set $A$ of actions include $\north$, $\south$, $\east$,
$\west$ (that stand for the robot moves north, south, east, and west,
respectively), $\load$, and $\unload$ (that stand for the robot
loads and unloads the pack). Examples of a specification for $\east$ and
$\load$ are the following:
$$
\begin{array}{r@{\ }l}
\loc(\robot)=0&\action{\east}\loc(\robot)=1  \\
\loc(\robot)=0\wedge\loaded=1&\action{\east}\loc(\pack)=1 \\
  \loc(\robot)=\loc(\pack)&\action{\load}\loaded=1 
\end{array}
$$
%

%
The robot has the following perception variables:
  \begin{itemize}
    \item $X$, $Y$  with $\dom(X)=\dom(Y)=\R$ are the x- and
      y-coordinates  of the position of the robot;
    \item $T$ with $\dom(T)=[0,1]$ is the output of RFID
      reader. 
      If the pack and the robot are is in the same room then the
      value of $T$ is close to 1, otherwise it is close to 0;
    \item $W$ with $\dom(W) = \R^+$ is the weight currently curried by
      the robot. 
  \end{itemize}
  The perception function is factorized as follows:

$p(x,y,z,w\mid \loc(\robot),\loc(\pack),\loaded)=
p_{X}(x\mid \loc(\robot))\cdot
p_{Y}(y\mid \loc(\robot))\cdot
p_{T}(t\mid \loc(\robot),\loc(\pack))\cdot
p_{W}(w\mid\loaded)
$
where:
\begin{align*}
  & p_X(x\mid\loc(\robot)) =  \N(x\mid\mu_{X,\loc(\robot)},\sigma)  \\
  &\ \ \ \ \ \ \ \ \ \ \ \ \mu_{X,\loc(\robot)} = \loc(\robot) \mod 2 + 0.5,  \\ 
  & p_Y(y\mid\loc(\robot)) =  \N(y\mid\mu_{Y,\loc(\robot)},\sigma)  \\ 
  & \ \ \ \ \ \ \ \ \ \ \ \ \mu_{Y,\loc(\robot)} = 
    \loc(\robot) \div 2 + 0.5,  \\
  & p_T(t\mid\loc(\robot),\loc(\pack)) = 
  \mathrm{B}(t\mid\alpha_{\loc(\robot),\loc(\pack)},
                  \beta_{\loc(\robot),\loc(\pack)}) \\ 
  & \ \ \ \ \ \ \ \ \ \ \ \ \alpha_{\loc(\robot),\loc(\pack)}=\cdot\mathbb{1}_{\loc(\robot)=\loc(\pack)}+
    2\cdot\mathbb{1}_{\loc(\robot)\neq\loc(\pack)}\\
  & \ \ \ \ \ \ \ \ \ \ \ \ \beta_{\loc(\robot),\loc(\pack)}=2\cdot\mathbb{1}_{\loc(\robot)=\loc(\pack)}+
1\cdot\mathbb{1}_{\loc(\robot)\neq\loc(\pack)} \\ 
  & p_W(w\mid\loaded) =   \Gamma(w\mid k_{\loaded},\theta_{\loaded}) \\
  & \ \ \ \ \ \ \ \ \ \ \ \ k_{\loaded}=\loaded+1,\ \theta = 1
\end{align*}
A graphical
representation of the states and transitions, a complete
specification of the actions, and the plots of the perception
functions are described in the appendix. 
\end{example}

\section{The Incremental Learning Algorithm}
\label{sec:learning}

The \emph{Acting and Learning Planning-domains} algorithm, 
\PAL,
described
in Algorithm \ref{PlanActLearn}, not only learns/updates
the transitions of a planning domain, but it can
also learn/update the perception function, and extend the set of
states, either by weakening some constraints, or by extending the domains of
some state variables.
\PAL\ can start ``from scratch'', i.e., from the simplest planning
domain, where each variable domain $D\in\bD$ is equal to $\{0\}$, without
constraints, and an empty $\gamma$.  Alternatively, \PAL\ can start
from any non empty planning domain corresponding to some ``prior
knowledge'' about the world.

Given a planning domain with a set of state variables in input, 
\PAL\ requires the perception function
$f(\bx,\bv)=\prod_{i=1}^np_{X_i}(\cdot|\bV_{J_i}=\bv_{J_i}))$ to be
defined for all variable assignments $\bV=\bv$.  Furthermore, since
\PAL\ introduces new values in the domain of state variables when
the perception function of a perceived value $\bx$ is too low, 
we need a 
method to intialise the perception function for these new values.
For this reason \PAL\ requires in input also an initialiser
$p_{init,X_i}$, for every perception variable $X_i$, that returns a PDF for any observation $\bx$.
Moreover, \PAL\ requires in input some additional {\em update
  parameters}, $\alpha$, $\beta$, $\gamma$, and $\delta$, all in
[0,1], which determine how much the agent trusts in the various
components of the model. In this section, we will explain the meaning
of each parameter.

%
%

\begin{algorithm}[t]\footnotesize
  \caption{\PAL}
  \begin{algorithmic}[1]
  \label{PlanActLearn}  
   \REQUIRE $\PD=\PDdef$ \COMMENT{Initial planning domain}
   \REQUIRE $f = \prod p_{X_i}$ \COMMENT{Initial perception function}
   \REQUIRE $s_0$ \COMMENT{Initial state}
   \REQUIRE $\alpha,\beta,\delta,\epsilon$ \COMMENT{Update parameters}
   \REQUIRE $p_{init,X_i}$ \COMMENT{Perception initialization for $X_i$}
   \REQUIRE $\maxiter$ \COMMENT{Maximum number of exploration steps}
   \STATE{$\Tr\gets\left<\right>$}\COMMENT{The empty history of transitions}
   \STATE{$\Obs\gets\left<\right>$}\COMMENT{The empty  history of observations}
   \FOR{$\iter\gets 1$ \TO $\maxiter$}
   \STATE $a\gets\explore(\PD,s_0)$ \label{alg:explore}
   \STATE $\bx\gets\act(a)$ \label{alg:act}
   \STATE $S'_0\gets\aboveThreshold(\bx,S)$ \label{alg:S-above-th}
   \IF{$S'_0=\emptyset$} \label{alg:Sp0-empty-1}
   \STATE $S'_0\gets \aboveThreshold(\bx,\dom(\bV)\setminus S)$ \label{alg:domV-above-th}
   \IF{$S'_0=\emptyset$} \label{alg:Sp0-empty-2}
   \STATE $\bD \gets \extenddom(\bD,f,\bx)$ \label{alg:extend-dom}
   \STATE $f \gets \extendf(\bD,f,\bx)$ \label{alg:extend-f}
   \STATE $S'_0\gets\aboveThreshold(\bx,\dom(\bV))$
   \ENDIF
   \ENDIF 
   \STATE $s'_0 \gets \oneof(\argmax_{s\in S'_0}
   f(\bx,s)\cdot\simstate(s,\gamma(s_0,a)\mid\delta))$ \label{alg:argmax} \label{alg:next-state}
   \IF {$s_0'\not\in S$}
   \STATE $S\gets S\cup\{s'_0\}$ \label{alg:weaken-constraints}
   \ENDIF
   \STATE{$\Tr \gets \append(\Tr,\left<s_0,a,s_0'\right>)$} \label{alg:extend-tr}
   \COMMENT{extend the transition history with the last transition}
   \STATE{$\Obs \gets \append(\Obs,\left<s_0',\bx\right>$)} \label{alg:extend-obs}
   \COMMENT{extend the observation history with the last observation}
   \STATE $\gamma \gets \updgamma(\gamma,\Tr\mid\alpha)$ \label{alg:update-gamma}
   \STATE $f \gets \updf(f,\Obs\mid\beta)$ \label{alg:update-f}
   \STATE $s_0 \gets s'_0$ \label{alg:update-s0}
   \ENDFOR
 \end{algorithmic}
\end{algorithm}

%
%

\begin{algorithm}[t]
\caption{\extenddom}
\label{extend-dom}  
\begin{algorithmic}[1]
  \REQUIRE $\bD$ \COMMENT{The set of domain of state variables} 
  \REQUIRE $f=\prod p_{X_i}$ \COMMENT{Perception function}
  \REQUIRE $\bx$ \COMMENT{The result of a perception}
  \STATE $s\gets \oneof(\argmax_{s\in \dom(\bV)}f(\bx,s))$
  \STATE $\bX_{<lik}\gets \{X_i\mid p_{X_i}(x_i\mid s[\bV_{J_i}]) < (1-\epsilon)\cdot\max p_{X_i}\}$\label{alg:compute-x-less-lik}
  \STATE $\bD_H\gets$ minimal hitting set of $\{\bD_{J_i}\}_{X_i\in
    \bX_{<lik}}$  \label{alg:compute-dh}
  \FOR {$D\in \bD_H$}
  \STATE $D\gets D\cup\{|D|\}$  \label{alg:extend-dh}
  \ENDFOR
  \RETURN $\bD$
\end{algorithmic}
\end{algorithm}
\begin{algorithm}[t]
  \caption{\extendf \label{extend-f}}
  \begin{algorithmic}[1] 
    \REQUIRE $\bD$ \COMMENT{The set of domain of state variables}     
    \REQUIRE $\bx$ \COMMENT{The result of a perception}
    \REQUIRE $p_{init,X_i}$ \COMMENT{Perception initializator for $X_i$}
    \FOR {$\bv \in \dom(\bV)$}
    \FOR {$X_i\in\bX$}
     \IF {$p_{X_i}(\cdot\mid\bV_{J_i}=\bv_{J_i})$ is not
       defined} \label{alg:notdefinedf}
     \STATE
     $\bv'_{J_i}=\argmax_{\bv_{J_i}|_{old}=\bv'_{J_i}|_{old}}p_{X_i}(\bx\mid\bv'_{J_i})$
     \IF {$p_{X_i}(\bx\mid\bv'_{X_i})\geq (1-\epsilon)\cdot\max p_{init,X_i}$}
     \STATE $p_{X_i}(\cdot\mid\bv_{J_i})=p_{X_i}(\cdot\mid\bv'_{J_i})$
     \ELSE
     \STATE $p_{X_i}(\cdot \mid \bV_{J_i}=\bv_{J_i})=p_{init,X_i}(x_i)$ \label{alg:initialize-pi}
     \ENDIF
     \ENDIF
     \ENDFOR
  \ENDFOR
  \end{algorithmic}
\end{algorithm}

\PAL\ iteratively refines the current planning domain $\PD$
with the associated perception function $f$, by executing the 
actions proposed by 
\explore\ (line  \ref{alg:explore}),\footnote{
A na\'ive implementation of
\explore\ can be a random generator of actions. A 
smarter strategy can take into account
how much has the agent already learned, which portion of the domain
has been already explored, and the part that 
still requires more learning.}
and by observing the action effects through the perception variables
$\bx$ (line \ref{alg:act}). In order to determine the next state
$s'_0$ (from line \ref{alg:S-above-th} to line \ref{alg:next-state}),
\PAL\ firstly computes
$\aboveThreshold(\bx,S)$
for the  observation $\bx$, which corresponds to 
the set of states such that the likelihood of observing each $x_i$ is
above the threshold $(1-\epsilon)\cdot\max p_{X_i}$. Formally: 
$\aboveThreshold(\bx,S)$ returns the set 
$
\left\{s\in S\mid \forall i,\  p_{X_i}(x_i\mid s[\bV_{J_i}]) \geq 
(1-\epsilon)\cdot\max p_{X_i}\right\}
$.
Intuitively, 
$\aboveThreshold$ selects a set of states that are the 
candidates to be the next state, i.e., those states for which
the likelihood of observing $x_i$ is higher than a certain
threshold defined by the parameter $\epsilon\in[0,1]$.
At one extreme, when $\epsilon=1$,
$\aboveThreshold(\bx,S)$ selects all states in $S$. 
On the other extreme, if $\epsilon=0$,
$\aboveThreshold(\bx,S)$ selects only those states
in which $f(\bx,s)$ reaches its maximum value. 
The lower $\epsilon$, the higher chance to introduce new states.
Intuitively, $\epsilon$ expresses how much we believe that the set of states learned so far are sufficient for the planning domain to model the real world. 

At line \ref{alg:Sp0-empty-1}, if there are no assignments among the current states that
pass the threshold, then \PAL\ considers the
assignments which are not in the set of states, i.e.,
$\dom(\bV)\setminus S$ (line \ref{alg:domV-above-th}).
If, even in this case, \aboveThreshold\ returns the emtpy set (line
\ref{alg:Sp0-empty-2}), then we need to extend the possible assignments
to variable by extending their domain. This is performed by
$\extenddom$ (line \ref{alg:extend-dom}), which extends the domain of
one or more state variable. 

\extenddom\ (see Algorithm~\ref{extend-dom})
takes in input the set $\bD$ of current state variables domains,
the perception function $f$, and the
current observation $\bx$. It starts by selecting one
assignment $s$ that maximises the likelihood of observing $\bx$. 
Then it computes the set $\bX_{<lik}$  of perception variables $X_i$ where 
the likelihood of the perceived value $x_i$ w.r.t. the state $s$
is  below the threshold (line \ref{alg:compute-x-less-lik}).
For every variable $X_i$ in $\bX_{<lik}$, \extenddom\ selects 
a domain in $\bD_{J_i}=\{\dom(V)\mid V\in \bV_{J_i}\}$ to be extended with a new value. Since we want to
minimize the number of values introduced, we choose to extend the
set of 
domains $\bD_H$ that is a minimal hitting set%
\footnote{A set of $A$ is an hitting set of a family of sets
    $\{B_i\}_{i=1}^n$ if \mbox{$A\cap B_i\neq\emptyset$} for every $i$.
    $A$ is a minimal hitting set if there is no hitting set $A'$ for
    $\{B_i\}_{i=1}^n$ with $|A'|<|A|$.} for
  $\{\bD_{J_i}\}_{X_i\in\bX_{<lik}}$. (line \ref{alg:compute-dh}).
Each domain in $D\in\bD_H$ is extended with a new value,
resulting in the set of $|D|+1$ elements $\{0,1,2,\dots,|\D|\}$
(line \ref{alg:extend-dh}). 

After executing \extenddom, \PAL\ calls \extendf\
(line \ref{alg:extend-f}) to initialise the perception function for the newly
introduced states.  
\extendf\
(see Algorithm~\ref{extend-f}) does this for all the variables without
perception function (line \ref{alg:initialize-pi}).
The introduction of the new values for
state variables, and the initialisation guarantees that
$\aboveThreshold$ returns a non empty set $S'_0$ of assignments.  Then
$\PAL$ selects the next state $s'_0$ among the elements of $S'_0$
(line \ref{alg:argmax}). The next state is one among the states that
maximize the product of the likelihood of observing $\bx$ and the
similarity with the state predicted by the transition function learned
so far, i.e., $\gamma(a,s_0)$. 
Ideally the next state will be the one that maximises
the likelihood of the perceived values, and the closest to the state
predicted by the model. 
These two sources of information however could
be contradictory, therefore we have to jointly maximize their product.

The similarity/distance measure, $\simstate(s,s'\mid\delta)$
for $s,s'\in \dom(\bV)$ is defined as 
$$
\prod_{i=1}^m
\frac{1+\delta\cdot(\mathbb1_{s[V_i]=s'(V_i)}\cdot(|\dom(V_i)|-1) -
  \mathbb1_{s[V_i]\neq s'(V_i)})}{1 + \delta(|\dom(V_i)|-1)}
$$
The parameter $\delta\in[0,1]$ allows us to adjust the similarity
measure between states. At one extreme, if
$\delta=0$, then every state is similar to every other state, i.e.,
$\simstate(s,s'\mid\delta)=1$, and the similarity does not play any
role in the maximisation. 
If $\delta=1$, 
$\simstate$ coincides with the equality relation, i.e.,
$\simstate(s,s'\mid\delta)=\mathbb{1}_{s=s'}$, which implies that
the maximization will always return $\gamma(a,s_0)$. 
The interesting case is when $\delta\in(0,1)$.
The lower $\delta$, the more we trust in the perceptions
of the agent’s sensors. The higher $\delta$, the more we trust
in the model learned so far. 

If $s'_0$ is not part of the 
current set of states $S$, we have to include it by weakening the
constraints.  
Let $C_1,\dots,C_k$ be the set of constraints defining $S$.
To specify $S\cup\{s'_0\}$, we have to weaken each $C_i$ as follows
\begin{align}
  \label{eq:update-constraints} 
\mbox{$C_i \vee \bigwedge_{V\in\bV}V=s'_0[V]$}
\end{align}
and if the new values $v_{new}$ is introduced we have to add the
following constraint:
\begin{align}
  \label{eq:new-constraints}
\mbox{$V=v_{new}\rightarrow\bigwedge_{V'\neq V}V'=s'_0[V']$}
\end{align}
for every variable $V$ for which the domain $\dom(V)$ has been
extended with the new value $v_{new}$.

\begin{proposition}
  Let $\{C'_i\}_{i=1}^h$ be the set of constraints
  resulting from the revision of $\{C_i\}_{i=1}^k$
  according to the rules
  \eqref{eq:update-constraints} and
  \eqref{eq:new-constraints}, then 
  $
  s\models \bigwedge_{i=1}^h C'_i$
  if and only if 
  $s\in S\cup\{s'_0\}$. 
\end{proposition}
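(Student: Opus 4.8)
The plan is to treat the biconditional as two implications, after isolating the one formula that makes the bookkeeping transparent. Write $\phi$ for $\bigwedge_{V\in\bV}V=s'_0[V]$, the conjunct added by~\eqref{eq:update-constraints}; a total assignment satisfies $\phi$ exactly when it equals $s'_0$. Thus the revised family $\{C'_i\}_{i=1}^h$ consists of $\{C_i\vee\phi\}_{i=1}^k$ together with the implications $\psi_V:\ V=v_{new}\rightarrow\bigwedge_{V'\neq V}V'=s'_0[V']$ from~\eqref{eq:new-constraints}, one per variable $V$ whose domain was just extended. I will use that $S$, as specified by its constraint representation, is the set of total assignments \emph{over the pre-extension domains} satisfying $\bigwedge_{i=1}^k C_i$; that each introduced value $v_{new}$ is fresh, i.e.\ not in the old domain of $V$; and that the extension is performed precisely to accommodate $s'_0$, so $s'_0[V]=v_{new}$ for every such $V$. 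Consequently a total assignment over the new domains takes a fresh value on some variable only if it agrees with $s'_0$ on all other variables.

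For the direction $s\in S\cup\{s'_0\}\Rightarrow s\models\bigwedge_i C'_i$: if $s=s'_0$ then $s\models\phi$, hence $s\models C_i\vee\phi$ for all $i$, and $s'_0$ makes the consequent $\bigwedge_{V'\neq V}V'=s'_0[V']$ of every $\psi_V$ true, so $s'_0\models\psi_V$. If instead $s\in S$, then $s\models C_i$ for every $i$ by the constraint representation of $S$, hence $s\models C_i\vee\phi$; and since $s$ uses only old values, the antecedent $V=v_{new}$ of each $\psi_V$ is false at $s$, so $\psi_V$ holds vacuously. Either way $s$ satisfies every $C'_i$.

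For the converse, suppose $s\models\bigwedge_{i=1}^h C'_i$. If $s\models\phi$ then $s=s'_0$ and we are done, so assume $s\not\models\phi$. From $s\models C_i\vee\phi$ and $s\not\models\phi$ we get $s\models C_i$ for $i=1,\dots,k$, i.e.\ $s\models\bigwedge_{i=1}^k C_i$; it remains to show that $s$ assigns only old values, for then $s$ is a total assignment over the old domains satisfying all the $C_i$, hence $s\in S$. Suppose toward a contradiction that $s[V]=v_{new}$ for some extended variable $V$ (the only variables that can carry a fresh value). Then the antecedent of $\psi_V$ holds at $s$, so $s[V']=s'_0[V']$ for all $V'\neq V$, and moreover $s[V]=v_{new}=s'_0[V]$; hence $s$ agrees with $s'_0$ everywhere, i.e.\ $s=s'_0$, contradicting $s\not\models\phi$. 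Therefore $s$ uses only old values and $s\in S\subseteq S\cup\{s'_0\}$.

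The propositional steps — $C_i\vee\phi$ collapsing to $C_i$ away from $s'_0$, and vacuous satisfaction of the added implications — are routine. The one place that needs care is the converse in the presence of a domain extension: one must verify that the auxiliary constraints~\eqref{eq:new-constraints} are \emph{exactly} strong enough to exclude every spurious total assignment that uses a fresh value but is neither $s'_0$ nor a pre-existing state, and this hinges on the fact that $v_{new}$ is precisely the value $s'_0$ takes on the extended variable, so that pinning all the other variables to $s'_0$'s values already forces the assignment to be $s'_0$.
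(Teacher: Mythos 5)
Your proof is correct and follows essentially the same route as the paper's: both directions are handled by the same case split on whether $s$ satisfies $\bigwedge_{V\in\bV}V=s'_0[V]$, with the constraints from~\eqref{eq:new-constraints} used to rule out assignments carrying a fresh value other than $s'_0$. If anything, you are slightly more explicit than the paper in noting that the argument needs $s'_0[V]=v_{new}$ on each extended variable for the exclusion step to force $s=s'_0$.
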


\begin{proof}
  Suppose that $s\models\{C'_i\}_{i=1}^h$, then for every $i=1,\dots,k$,
  $s\models C_i\vee\bigwedge_{V\in\bV}V=s'_0[V]$. This implies that
  either $s\models C_i$ or $s\models\bigwedge_{V\in\bV}V=s'_0[V]$.
  If, for some $i$, $s\models\bigwedge_{V\in\bV}V=s'_0[V]$, then
  $s=s'_0$. Of for all $i$, $s\not\models\bigwedge_{V\in\bV}V=s'_0[V]$.
  then for all $i$, $s\models C_i$. 
  Furthermore, for every $V\in\bV$, $s[V]\neq v_{new}$
  otherwise, by the fact that $s\models\eqref{eq:new-constraints}$ we
  would have that $s=s'_0$. This guarantees that $s\in S$.
  \noindent 
  Vice versa.  Suppose that $s\in S\cup\{s'_0\}$. If $s\in S$, then
  $s\models C_i$, and therefore 
  $s\models\eqref{eq:update-constraints}$; furthermore
  $s\models\eqref{eq:new-constraints}$ because $s$ does not contain
  assignment to new values, and therefore the premises of
  \eqref{eq:new-constraints}) is false. If $s=s'_0$ then
  $s\models\bigwedge_{V\in\bV}V=s_0'[V]$ and therefore
  $s\models(\ref{eq:update-constraints},\ref{eq:new-constraints})$. 
\end{proof}

%
%

\PAL\ then extends the sequence of transitions $\Tr$ and of observations $\Obs$, and learn the  new transition function $\gamma$ and the new perception function $f$. 
The functions $\updgamma$ and $\updf$ update the transition function
$\gamma$ and the perception function $f$, respectively, depending on
the data available in $\Tr$ and $\Obs$. The update functions take into
account (i) the current model, (ii)
what has been observed in the past, i.e., $\Tr$ and $\Obs$, and (iii)
what has been just observed, i.e., $\left<s_0,a,s_0'\right>$
and $\left<s_0',\bx\right>$.
%
The update functions can be defined in several different ways, depending
on whether we follow a cautious strategy, where changes are made only
if there is a certain number of evidences from acting and perceiving
the real world, or a more impulsive reaction to what the agent has
just observed. 
In the following, 
we describe in detail 
how we create/update
transitions, and how we create/update perception functions.

\noindent
\textbf{Updating transitions.}
$\updgamma$ decides whether and how to update the transition function.
If $s'_0$ is the state that maximises the product of the perception function and of the similarity, 
and $s'_0$ is different from the state predicted by the planning domain,
i.e., $s'_0\neq \gamma(a,s_0)$, then $\gamma$ may need to be revised to take into account this discrepancy. 
Since our domain is deterministic (the transition $\gamma$ must lead
to a single state), if the execution of an action leads to an
unexpected state, we have only two options: either change $\gamma$
with the new transition or not. 
We propose the following transition update function that
depends on $\alpha$: 
We define 
$\updgamma(\gamma,\Tr)(s,a)=s'$  where $s'$ is a state that maximizes 
\begin{equation}
\alpha\cdot\mathbb{1}_{s'=\gamma(s,a)}+
    (1-\alpha)\cdot|\{i\mid \Tr_i=\left<s,a,s'\right>\}|
\label{eq:upd_gamma}
\end{equation}
where $\Tr_i$ is the $i$-th element of
$\Tr$, and $\alpha\in[0,1]$. 
Notice that, if
$\alpha = 1$, we are extremely cautious, we strongly believe in our
model of the world, and we never change the transition $\gamma$.
Conversely, if $\alpha = 0$, we are extremely impulsive, we do not trust our model,
and just one evidence makes us to change the model. In the
intermediate cases, $\alpha \in (0,1)$, depending on the value of
$\alpha$, we need more or less evidence to change the planning
domain.
In order to update $\gamma$, 
we have to revise the action specifications.
We replace every rule $r$ about $a$
of the form
$
r:\ \prem(r)\action{a}\eff(r)
$,
such that 
$
s\models \prem(r) \mbox{ and not } s\models \eff(r) 
$
with the following rules for every $V_i$ 
$$
r'_i:\ prem(r) \wedge V_i\neq s[V_i]\action{a}\eff(r)
$$
and the following rule 
for all $j$, such that $s[V_j]\neq s'[V_j]$
\begin{center}
$r''_j:\ \bigwedge_{j=1}^mV_j=s[V_j]\action{a} V_j=s'[V_j]$
\end{center}
Notice that this method might generate a proliferation of very
specific rules. Therefore after this step it is convenient to apply
some algorithm for rule factorisation. Examples of factorization rules
are the following:
\begin{align*}
  \begin{array}{r}
  \Gamma, V=v \action{a}V'=v' \\
    \Gamma, V\neq v\action{a}V'=v'
  \end{array} & \mbox{are merged in }
  \Gamma\action{a} V'=v' 
\end{align*}
Another example, is the following. Suppose that
$\dom(V) = \{0,1,2\}$, then: 
\begin{align*}
  \begin{array}{r}
    \Gamma, V=0 \action{a}V'=v' \\
    \Gamma, V=1\action{a}V'=v'
  \end{array} & \mbox{are merged in }
    \Gamma, V\neq 2 \action{a}V'=v'
\end{align*}
A final example is the following. Suppose that
$\dom(V)$ and $\dom(V')$ are equal to $\{0,1\}$ then 
\begin{align*}\footnotesize 
  \begin{array}{r}
    \Gamma, V=0, V'=0 \action{a}V''=v'' \\
    \Gamma, V=1, V'=1 \action{a}V''=v'' \\
  \end{array} & \mbox{are merged in } \\ \ \ \ \ \ \ \ \ \ \ \ \ 
    \Gamma, V=V' \action{a}V''=v'' 
\end{align*}
Dealing with rule factorisation can be considered as a separate topic
and for lack of space is not treated in this paper. However, this
operation results crucial in order to generate compact and
``understandable'' description of the transition function. 

\paragraph{Updating the perception function.}  
The update of the perception function is based on the current
perception function $f(\bx,s)$ for $s\in S$ and the set of
observations $\Obs$. We suppose that each function $p_{X_i}$ composing 
the perception function $f=\prod_ip_{X_i}$, belongs to a parametric
family with parameters $\prm_{X_i}$. For every partial assignment
$\bV_{J_i}=\bv_{J_i}$ to the state variables $\bV_{J_i}$ from which
$X_i$ depends on, $p_{X_i}(\cdot\mid\bv_{J_i})$ is obtained by
setting the parameters  $\prm_{X_i}$ to 
some value $\prm_{X_i,\bv_{J_i}}$. 
In Example~\ref{ex:simple},
$p_{X}$ is a Gaussian distribution with parameters 
$\prm_{X}=\left<\mu_{X},\sigma_X\right>$. For every value
$r\in\dom(\loc(\robot))$, $\mu_{X,r}$ and $\sigma_{X,r}$ are the
mean and the standard deviation of $p_X$ and 
$p_X(x\mid\mu_{X,r},\sigma_{X,r})=\N(x,\mu=\mu_{X,r},\sigma=\sigma_{X,r})$
expresses the likelihood of observing 
$x$ when the robot is in the room $r$. We denote by $\prm_{\bX}$
all the parameters in $\prm_{X_1},\dots,\prm_{X_n}$, and
$\prm_{\bX,\bv}$, for $\bv\in\dom(\bV)$, their instantiations
$\prm_{X_1,\bv_{J_1}},\dots, \prm_{X_n,\bv_{J_n}}$. 

Given a set of 
observations about the state $s$,
$\Obs(s)=\left<\bx^{(0)},\dots, \bx^{(k)}\right>$ 
and a new observation $\left<\bx^{(k+1)},s\right>$
we have to update the values of $\prm_{\bX,s}$ in
order to maximise a combination of the current belief of the agent and
the likelihood of the entire set of observations extended with the new
observation. Also in this case the agent can be more or less careful
in the revision, being more or less confident in its beliefs.
The update equation is therefore defined as: 
\begin{align*}
  \prm'_{\bX,s} & = \beta\cdot\prm_{\bX,s}+(1-\beta)\cdot
            \argmax_{\prm'} \mathcal{L}(\prm', \bx^{(i)},\dots,\bx^{(k+1)},s)
\end{align*}
where the parameter $\beta\in[0,1]$, expresses agents's confidence in
its beliefs; the higher the value of $\beta$ the more careful the
agent is in the revision, and
\begin{align*}
  \mathcal{L}(\prm,
  \bx^{(1)},\dots,\bx^{(k)},s) & \propto \prod_{j=1}^{k}f(\bx^{(j)},s)
\end{align*}
Due to the factorization of the perception function $f=\prod p_{X_i}$,
we can separately update each set of parameters $\prm_{X_i}$
associated to the perception variable $X_i$, defining therefore
\begin{align}
  \label{eq:update-pxi}
  \prm^{k+1}_{X_i,s} & = \beta\cdot\prm^k_{X_i,s}+(1-\beta)\cdot
         \argmax_{\prm'_{X_i}}\prod_{j=1}^kp_{X_i}(x_i^{(j)}\mid\prm'_{X_i})
\end{align}


\section{Explanatory Examples}
\label{sec:example}
In Example~\ref{ex:newstates} we show how \PAL\ learns new states by
extending the domain of state variables of the planning domain of
Example~\ref{ex:simple}. In Example~\ref{ex:cat}, we show how \PAL\
can deal with highly unexpected events by adapting the planning
domain.
\begin{example} 
\label{ex:newstates}
Let us suppose that the robot starts with the planning domain described
in Example~\ref{ex:simple}. 
\begin{enumerate}[leftmargin=0pt,itemindent=20pt]
\item 
  Suppose that the robot believes to be in the state
  $s_0$ where 
    $\loc(\robot)=0$, $\loc(\pack)=1$, and $\loaded = 0$ (shortly
    written as $s_0 = 010$), and that the world is in the state shown in 
    the top-left rectangle of Figure~\ref{fig:example-planning-domain-states}.
  Suppose that $\explore$ generates the action $\east$
  (line \ref{alg:explore}) and the execution of this action moves the
  robot of about one unit in the east direction. The observation
  returned after the execution (line \ref{alg:act})
  is $\bx=\left<x,y,t,w\right>$ with
  $x\approx 1.5$, because the robot is moving east of approximately 1
  unit; 
  $y\approx 0.5$, because the robot is moving approximately horizontally;
  $t\approx 0$, because, differently from the model, the
  pack is not in that room;
  finally $w\approx 0$, since the robot is carrying nothing.
\item 
\PAL\ computes the set of states $S_0' \subseteq S$ that
are above the threshold (line \ref{alg:S-above-th}).
Let us suppose that $\epsilon =0.5$, i.e., we decide to balance our trust in the initial set of states and in the perceptions after executing actions.
The robot position perception variables $x$ and $y$ 
indicate that the robot is in room $1$ ($\loc(\robot)=1$).
The sensor tag perception variable $t$ indicates that the pack is not in the same room of the robot, 
i.e., , $\loc(\pack)$ is equal to $0$, or $2$, or $3$.
The weight perception variable $w$ indicates that the robot is not loaded, i.e., $\loaded=0$.
Therefore, $S_0' = \{100,120,130\}$.

\item
Since $S_0' \neq \emptyset$, \PAL\ 
computes the set $S_0'$ of the states in $S'_0$ that maximise
$f(x,y,t,w,s)\cdot\simstate(s,110\mid\delta)$. (line \ref{alg:argmax}).
Notice that $\gamma(\east,s_0)=\gamma(\east,010)=110$.
Notice that in all the states $s\in S_0'$, $f(x,y,t,w,s)$ is the same,
and it approximately equal to
\begin{align*}
& \N(1.5\mid\mu=1.5,\sigma=1)\cdot\N(0.5\mid\mu=0.5,\sigma=1) \\
& \ \ \ \ \ \ \ \ \cdot  B(0\mid \alpha=1,\beta=2)\cdot\Gamma(0\mid k=0,\theta=1)
\end{align*}
i.e., the robot is unloaded and in room $1$, and the pack is in a different room.
The values of the factor $\simstate(s,110\mid\delta)$ is also the same
for all the elements of $S'_0$. Therefore \PAL\ randomly select one
state of $S'_0$. Suppose that \PAL\ selects $s'_0=130$. 
\item  
Then \PAL\
  jumps to line \ref{alg:extend-tr} and the transition
  $\left<010,\east,130\right>$ is added to the transition log $\Tr$,
  and the observation $\left<130,\bx\right>$, 
  with $\bx \approx \left<1.5,0.5,0,0\right>$
  is added to the observation log $\Obs$ (line
  \ref{alg:extend-obs}).
\item Then \PAL\ revises the transition function $\gamma$
  (line \ref{alg:update-gamma}). 
  According to equation \eqref{eq:upd_gamma},  
  with $a=\east$ and $s=010$, we have:
  $$\small 
  \begin{array}{|l|l|c|c|c|}\hline
    s' & \mbox{equation \eqref{eq:upd_gamma}} & \alpha = 0 &
    \alpha=\frac12 & \alpha=1 \\  \hline 
    130 & \alpha\cdot 0 + (1-\alpha)\cdot 1 & 1 & \frac12& 0 \\ 
    110 & \alpha\cdot 1 + (1-\alpha)\cdot 0 & 0 & \frac12& 1\\
    \mbox{others} &  \alpha\cdot 0 + (1-\alpha)\cdot 0 & 0& 0& 0\\ \hline 
  \end{array}
  $$
If $\alpha > 1/2$ then $\gamma$ will not be changed, otherwise
$\gamma(010,\east)=130$. 
Let us suppose that $\alpha > 1/2$.
\item The new current state $s_0$ is set to $130$ (line
  \ref{alg:update-s0}), and a new action is generated by $\explore$
  (line \ref{alg:explore}).  Let's suppose it is again $\east$.  The values returned by the perception
  function are $x\approx 2.5$ and $y\approx 0.5$, since the action
  $\east$ moves the robot east of one unit
  (this is possible since actually 
  there is a room east of room 1); $t\approx 1$ (since now the pack is actually in the same
  room of the robot), and $w\approx 0$ (since the robot is unloaded).
\item 
Now there are no states in $S$ that are above the threshold (line \ref{alg:S-above-th}), since 
$p_X(2.5\mid s)$ is very low for all the states 
$s \in S$. Therefore $S_0' = \emptyset$.
\item
\PAL\ checks therefore if there are assignments to state variables that are not states in $S$ that have the perception function above the threshold (line \ref{alg:domV-above-th}). Even in this case, for the same reason, no assignment allows for a perception function that is above the threshold. 
Therefore $S_0'$ is again empty.
\item
  \PAL\ generates therefore a new state by extending the domain of
  state variables (line \ref{alg:extend-dom}). \extenddom\ starts by
  computing the states that maximizes the likelihood of observing
  $\left<x,y,t,w\right>\approx\left<2.5,0.5,1,0\right>$, i.e., 
  $s=110$. Notice that $P_Y(\approx 0.5\mid \loc(\robot)=1)$ is close
  to the maximum of $P_Y$; similarly for
  $P_T(\approx1\mid\loc(\robot)=\loc(\pack))$ and
  $P_W(\approx0\mid\loaded=0)$ are also close to the maximum of $P_T$
  and $P_W$ respectively. So if $\epsilon$ is small enough (i.e., the
  robot is enough ``open'' to the introduction of new states), 
  $X$ is the only variable for which
  $P_X(\approx2.5\mid\loc(\robot)=1)$ is below the threshold.
  Therefore $\bX_{\leq lik}=\{X\}$, and $\bD_H=\{\rooms\}$ (line
  \ref{alg:compute-dh} of algorithm \extenddom). 
\item The domain $\rooms$ is therefore extended with a new value,
  obtaining $\rooms=\{0,1,2,3,4\}$. Notice that, since $\rooms$ is
  also the domain of the variable $\loc(\pack)$, this implies that we
  also extend the domain of this variable. With this extension we
  pass from $4\cdot4\cdot2=32$ possible assignments to
  $5\cdot5\cdot2=50$ possible assignments. 

\item \extendf\ (line \ref{alg:extend-f}) extends the perception
  function for the new assignments:
  $p_X(x\mid\loc(\robot)=4) = \N(x\mid \mu_{\loc(\robot)=4}\approx
  2.5,\sigma=1)$, and
  $p_Y(y\mid\loc(\robot)=4) = \N(x\mid \mu_{\loc(\robot)=4}\approx
  0.5,\sigma)$.  $p_T(t\mid s)$ when $s$ contains the new value $4$ is
  already defined, and the perception function for $W$ is not extended
  since it is not related to the state variable with domain $\rooms$.
  The graphs of the extended component of the perception funciton are
  shown in appendix. 
\item The $s'_0$ that maximises the new perception function is then $440$ (lines \ref{alg:argmax} and 
\ref{alg:next-state}). \PAL\ therefore  updates the constraints in order to include only $440$ as a new state. According to Formula \eqref{eq:new-constraints}, \PAL\  generates the following new constraints:
  \begin{align}
    \label{eq:new-constraint1}
    \loc(\robot) = 4 \rightarrow \loc(\pack) = 4 \wedge \loaded = 0 \\ 
    \label{eq:new-constraint2}
    \loc(\pack) = 4 \rightarrow \loc(\robot) = 4 \wedge \loaded = 0
  \end{align}
  According to Formula \eqref{eq:update-constraints}, \PAL\  updates the previous constraint as follows:
  \begin{align*}
    (\loaded = 1 \rightarrow \loc(\robot) = \loc(\pack)) \vee\\
    (\loc(\robot) = 4 \wedge \loc(\pack) = 4 \wedge \loaded = 0)
  \end{align*}
  which is equivalent to
    $\loaded = 1 \rightarrow \loc(\robot) = \loc(\pack)$.
  Therefore \PAL\ adds only the constraints  \eqref{eq:new-constraint1} and
  \eqref{eq:new-constraint2}.
\item $\Tr$ becomes $\left<\left<010,\east,130\right>,
    \left<130,\east,440\right>\right>$; $\Obs$ becomes 
   approximately the list of 
   $\left<\approx\left<0.5,0.5,0,0\right>,010\right>$,
    $\left<\approx\left<1.5,0.5,0,0\right>,130\right>$, and
    $\left<\approx\left<2.5,0.5,1,0\right>,440\right>$.
\item Suppose that the parameters $\alpha$ and $\beta$ are high enough not to
  affect the change of $\gamma$ and that they have a minimal effect on
  the perception function. The new state $s_0$ is now set to $440$.
\item  
Suppose that \explore\ returns action load, $\load$. 
The new perceived values are approximately
  $\left<x,y,t,w\right>\approx\left<2.5,0.5,1,1\right>$. 
  None of the states in $S$ is such that $p_W(w\mid s )$ is above the
  threshold (line \ref{alg:S-above-th}).
\PAL\ checks therefore if there is some assignment that does
  not satisfy the constraints with a better likelihood
  (line \ref{alg:domV-above-th}). Indeed the assignment $441$ is such that all the likelihoods 
  $p_X(2.5\mid\loc(\robot)=4)$, $p_Y(0.5\mid\loc(\robot)=4)$,
  $p_T(1\mid\loc(\robot)=4,\loc(\pack)=4)$,
  and $p_W(1\mid\loaded=1)$ are above the threshold.
  This meas that $s'_0=441$ is the new state, and \PAL\ adds it to $S$ (line \ref{alg:weaken-constraints}).
Adding $441$ to the set of states amounts to revise the
  constrains following formula \eqref{eq:update-constraints}.
  After some simplification, 
  \PAL\ obtains the constraints 
  \begin{align}
    \label{eq:r4-imp-p4}
    \loc(\robot) = 4 \rightarrow \loc(\pack) = 4 \\
    \label{eq:p4-imp-r4}    
    \loc(\pack) = 4 \rightarrow \loc(\robot) = 4 \\
    (\loaded = 1 \rightarrow \loc(\robot) =\loc(\pack)) 
  \end{align}
\end{enumerate}
\end{example}  

\begin{example} 
\label{ex:cat}
We continue the previous example showing how \PAL\ adapts the planning
domain to unexpected situations. 
\begin{enumerate}[leftmargin=0pt,itemindent=20pt]
\item 
  Suppose that $\explore$ returns the action $\west$ (go west) and
  that
  while executing this action the pack unexpectedly falls down and
  remains in room $C$, and simultaneously
  the cat (with a similar weight of the pack) jumps on top of the
  robot (!) (see bottom-right rectangle of
  Figure~\ref{fig:example-planning-domain-states}). 
\item 
The sensors return $\bx\approx\left<1.5,0.5,0,1\right>$. $f(\bx,s)$ is
  very low (below the threshold) for all the states in $S$ since
  $w\approx 1$ should mean that the pack is loaded,
  while $t\approx 0$ tells us that the pack is not in the same room of the robot, 
  and the constraint
  \eqref{eq:loaded-implies-same-location}
  imposes that $\loc(\robot)=\loc(\pack)$.
\item \PAL\ now checks the assignments to state variables that are not states in $S$
(line \ref{alg:domV-above-th}).
The assignment corresponding to the actual situation, i.e.,
the robot is loaded and in a different room from the pack,
that is $141$, 
maximises $f(\bx,s)\cdot\simstate(s,\gamma(441,\west)\mid\delta)$ (with $\delta<1$).
To extend $S$ with $141$ (line \ref{alg:weaken-constraints}), 
\PAL\ weakens the constraints according to rules
\eqref{eq:update-constraints} and \eqref{eq:new-constraints}
obtaining: 
   \begin{align*}\footnotesize
   & \loc(\robot) = 4 \rightarrow \loc(\pack) = 4 \\ 
   & \loc(\pack) = 4 \rightarrow \loc(\robot) = 4 \vee (\loc(\robot)=1 \wedge \loaded =1 ) \\
  & \loaded\!=\!1 \rightarrow \loc(\robot)\!=\!\loc(\pack) \vee (\loc(\robot)\!=\!1\wedge \loc(\pack)\!=\!4)
   \end{align*}
%
\item Finally, 
suppose that, while the robot is carrying the pack, the cat jumps on top of the pack.
The perception variable $W$ will return a value around 2 (1 for the
pack plus 1 for the cat) and $p_W(w\mid\loaded)$ will be below the
threshold for all the states $s\in S$. Then 
\PAL\ will extend the domain of the Boolean variable $\loaded$,  which becomes a three-valued variable,
  i.e.,
  $\nrofobj$ is extended from $\{0,1\}$ to $\{0,1,2\}$.

\end{enumerate}
\end{example}



\section{Related Work}
\label{sec:rw}
 
As far as we know, the problem addressed in this paper is novel, as well as the approach and the proposed solution.
Some works on domain model acquisition focus on the
problem of learning action schema from collections of plans, see,
e.g. \cite{gregory2016,mccluskey2009,cresswell2013,mourao2012,mehta2011,zhuo2014}. They do not consider perceptions and the set of states is given.

Works on learning and planning in POMDP (see, e.g., \cite{ross2011,katt2017}) learn a model of the POMDP domain through interactions with the environment, with the goal to do planning, e.g., by reinforcement learning or by sampling methods. They learn the transitions, while the set of states is given as well as the mapping through observations.

Some works on POMDP Model Learning, see, e.g.,
\cite{otterlo2009,zheng2018}, drop the assumption that
the set of states is given or the bound on the number of states is known. 
Two main differences with our work still exist.
First, we do not learn a POMDP model,
we learn a deterministic model that enables efficient planning techniques. Second, we learn the set of states represented through state variables and constraints, which is the practical way to represent a planning domain. 
%

Our approach shares some similarities with the work on planning by reinforcement learning
\cite{kaelbling96,sutton98,geffner2013,yang2018,parr97,ryan2002,leonetti2016}, since we learn by acting in the environment. However, these works focus on learning policies and assume the set of states and the correspondence between continuous data from
sensors and states are fixed.
 
Different approaches are those followed by LatPlan and Causal InfoGAN.
Causal InfoGAN \cite{kurutach2018} learns discrete or continuous models from high dimensional sequential observations. This approach fixes a priori the size of the discrete domain model, and performs the learning off line. Differently from our approach their goal is to generate an execution trace in the high dimensional space.
LatPlan \cite{asai2018} takes in input pairs of high dimensional raw data (e.g., images) corresponding to transitions. It also takes an offline approach.
Our approach is online and local,
we can therefore deal with a dynamic environment. 

A complementary approach is pursued in works that plan and learn directly in a continuous space, see e.g., \cite{abbeel2006,mnih2015,coeeyes2018}.
These approaches do not require a perception function, since there is no abstract discrete model of the
world. Such approaches are very suited to address some tasks, e.g., moving a
robot arm to a desired position or performing some manipulations.
However, we believe that, in several situations, it is
conceptually appropriate and practically efficient to learn an abstract discrete and deterministic model where planing is much easier and efficient to perform.

Finally, we share the idea of a planning domain at the abstract level with all the work on abstraction on MDP models, see, e.g., \cite{abel2018}. However,
our problem and approach is substantially different, since
in the work on abstraction on MDP models
the mapping between the original MDP and the abstract states is given, while we learn it.

\section{Conclusion and Future Work}
\label{sec:conclusion}

%
We believe this work opens a new perspective in learning planning domains and perceptions through continuous observations. The framework provides the ability to learn domains represented with state variables and constraints, which is the natural way to represent planning domains. Learning a finite deterministic planning domain represented with state variables opens up the possibility to use all the available efficient planners to reason at the abstract level. Learning the perception function takes into account the fact that,
while an agent can conveniently plan at the abstract level, it perceives
the world and acts through sensors and actuators that work in a continuous space. Learning perception functions allows us to learn new states that represent unexpected situations of the world.
Finally, the framework allows us to learn domains incrementally, and to adapt to a changing environment.

Still a lot of work remains to do.
A proof of convergence to coherent models should be provided, and the conditions of convergence should be defined. The framework should be implemented and an experimental evaluation should be performed.
Additional work needs to be done to support more sophisticated action and constraint revisions on the basis of the observed transition.
Finally, the \PAL\ algorithm should be integrated with a
state-of-the-art on-line planner and with efficient exploration techniques.

\bibliographystyle{plain}
\bibliography{biblio}
\newpage 
\appendix
\section*{Appendix}

We specify the set of actions $A= \{\north, \south, \east, \west, \load, \unload \}$ as follows: 
  $$\begin{array}{l@{\ \ \ \ }l}
   \loc(\robot)=0\action{\north}\loc(\robot)=2 & 
   \loc(\robot)=0\wedge\loaded=1\action{\north}\loc(\pack)=2  \\
   \loc(\robot)=1\action{\north}\loc(\robot)=3 &
   \loc(\robot)=1\wedge\loaded=1\action{\north}\loc(\pack)=3  \\ 
   \loc(\robot)=2\action{\south}\loc(\robot)=0 &
   \loc(\robot)=2\wedge\loaded=1\action{\south}\loc(\pack)=0 \\
   \loc(\robot)=3\action{\south}\loc(\robot)=1 &
   \loc(\robot)=3\wedge\loaded=1\action{\south}\loc(\pack)=1 \\
   \loc(\robot)=0\action{\east}\loc(\robot)=1 & 
   \loc(\robot)=0\wedge\loaded=1\action{\east}\loc(\pack)=1 \\
   \loc(\robot)=2\action{\east}\loc(\robot)=3 & 
   \loc(\robot)=2\wedge\loaded=1\action{\east}\loc(\pack)=3 \\
   \loc(\robot)=1\action{\west}\loc(\robot)=0 & 
   \loc(\robot)=1\wedge\loaded=1\action{\west}\loc(\pack)=0 \\ 
   \loc(\robot)=3\action{\west}\loc(\robot)=2 & 
   \loc(\robot)=3\wedge\loaded=1\action{\west}\loc(\pack)=2\\
   \loc(\robot)=\loc(\pack)\action{\load}\loaded=1 &
   \loaded=1\action{\unload}\loaded=0
  \end{array}
$$

  A graphical representation of the state transition system
  corresponding to the planning domain described in the example in the paper is shown in
  Figure~\ref{fig:example-domain}, where - for instance - we label with 310 the state where $\loc(\robot) = 3$,
   $\loc(\pack) = 1$, and $\loaded = 0$,  and so on. 
\begin{figure}[h]
 \begin{center}
   \begin{tikzpicture}[x=50,y=50,scale=.75,every node/.style={scale=.75}]
    \foreach \z/\w/\m in {-1/-1/0,1/-1/1,-1/1/2,1/1/3}{
      \foreach \x/\y/\n in {-1/-1/0,1/-1/1,-1/1/2,1/1/3}
        \node[state,fill=red!10] at (\z+.5*\x,\w+.5*\y) (\n\m0) {\n\m0};};
    \foreach \z/\w/\m in {-1/-1/0,1/-1/1,-1/1/2,1/1/3}{
      \foreach \x/\y/\n in {-1/-1/0,1/-1/1,-1/1/2,1/1/3}
        \node[state,fill=blue!10,opacity=.3] at (2.5*\z+.5*\x,2.5*\w+.5*\y) (\n\m1) {\n\m1};};
      \foreach \x/\y/\n in {-1/-1/0,1/-1/1,-1/1/2,1/1/3}
        \node[state,fill=red!10] at (3*\x,3*\y) (\n\n1) {\n\n1};
      \foreach \f/\t in {0/2,1/3}{
         \foreach \m in {0,...,3}{
           \draw[->] (\f\m0) to [bend left=20] node[above, sloped]{$\north$} (\t\m0);
           \draw[->] (\t\m0) to [bend left=20] node[above, sloped]{$\south$} (\f\m0);};
         \draw[->] (\f\f1) to [bend left=10] node[below, sloped]{$\north$} (\t\t1);
         \draw[->] (\t\t1) to [bend left=10] node[below, sloped]{$\south$} (\f\f1);};
      \foreach \f/\t in {0/1,2/3}{
         \foreach \m in {0,...,3}{
           \draw[->] (\f\m0) to [bend left=20] node[above, sloped]{$\east$} (\t\m0);
           \draw[->] (\t\m0) to [bend left=20] node[below, sloped]{$\west$} (\f\m0);};
         \draw[->] (\f\f1) to [bend left=10] node[above, sloped]{$\east$} (\t\t1);
         \draw[->] (\t\t1) to [bend left=10] node[below, sloped]{$\west$} (\f\f1);};
       \foreach \n in {0,...,3}{
         \draw[->] (\n\n0) to [bend left] node[above,sloped]{$\load$}(\n\n1);
         \draw[->] (\n\n1) to [bend left] node[below,sloped]{$\unload$}(\n\n0);};
  \end{tikzpicture}
\end{center}
\caption{\label{fig:example-domain}}
\end{figure}
   We have 3 state variables, whose domains ave dimension $4,4,2$, for a total of 32 possible assignment. The constraint  \ref{eq:loaded-implies-same-location} 
   \begin{align}
\loaded = 1 \rightarrow \loc(\robot)=\loc(\pack)
\end{align}
   restricts to 20 states. In Figure~\ref{fig:example-domain} the assignments that are not states are depicted in grey.
   
  The perception function is factorized as follows:
  \begin{center}
  \begin{tikzpicture}
    \node[circle,draw] at (1,2) (pr) {$\loc(\robot)$};
    \node[circle,draw,right=of pr] (pp) {$\loc(\pack)$};
    \node[circle,draw,right=of pp] (l) {$\loaded$};
    \node[circle,draw,fill=gray!20] at (.5,0) (x) {$X$};
    \node[circle,draw,fill=gray!20,right=of x] (y) {$Y$};
    \node[circle,draw,fill=gray!20,right=of y] (t) {$T$};
    \node[circle,draw,fill=gray!20,right=of t] (w) {$W$};    
    \draw (x) -- (pr); 
    \draw (y) -- (pr); 
    \draw (w) -- (l);
    \draw (t) -- (pp); 
    \draw (t) -- (pr); 
  \end{tikzpicture}
\end{center}

The following figures show a graphical representation of the perception functions for the perception variables $X$, $Y$, $T$, and $W$.

\begin{center}
\begin{tikzpicture}
\begin{axis}[no markers,thick,
  height=5cm, width=12cm,grid,
  domain=0:3, samples=\nofsamples,
  xlabel=$x$]
 \addplot {normal(x,0.5,.25)};
 \addplot {normal(x,1.5,.25)};
\legend{\mbox{\scriptsize $p_X(x\mid\loc(\robot)\in\{0,2\})$},%
        \mbox{\scriptsize $p_X(x\mid\loc(\robot)\in\{1,3\})$}}
\end{axis}
\end{tikzpicture}

\begin{tikzpicture}
\begin{axis}[no markers,thick,
  height=5cm, width=12cm,grid,
  domain=0:3, samples=\nofsamples,
  xlabel=$y$]
 \addplot {normal(x,0.5,.25)};
 \addplot {normal(x,1.5,.25)};
\legend{\mbox{\scriptsize $p_Y(y\mid\loc(\robot)\in\{0,1\})$},%
        \mbox{\scriptsize $p_Y(y\mid\loc(\robot)\in\{2,3\})$}}
\end{axis}
\end{tikzpicture}

\begin{tikzpicture}
\begin{axis}[no markers,thick,legend style={at={(0.5,.98)},anchor=north},
  height=5cm, width=12cm,grid,
  domain=-1:3, samples=\nofsamples,
  xlabel=$t$]
  \addplot[draw=blue,domain=0:1] {betapdf(x,1,5)};
  \addplot[draw=red,domain=0:1] {betapdf(x,5,1)};
\legend{\mbox{\scriptsize $p_T(t\mid\loc(\robot)\neq\loc(\pack))$},
        \mbox{\scriptsize $p_T(t\mid\loc(\robot)=\loc(\pack))$}}
\end{axis}
\end{tikzpicture}

\begin{tikzpicture}
\begin{axis}[no markers,thick,
  height=5cm, width=12cm,grid,
  domain=0:3, samples=\nofsamples,
  xlabel=$w$]
  \addplot[draw=blue,domain=0:3] {gammapdf(x,1,.1)};
  \addplot[draw=red,domain=0:3] {gammapdf(x,100,0.01)};
\legend{\mbox{\scriptsize $p_W(w\mid\loaded=0)$},
        \mbox{\scriptsize $p_W(w\mid\loaded=1)$}}
\end{axis}
\end{tikzpicture}
\end{center}

The following figures show a graphical representation of the
conditional PDF $p_X(x\mid\loc(\robot))$ after the extension of the domain
$\rooms$ with the value $4$ and the conditional PDF
$p_W(w\mid\loaded)$ after the extension of the domain $\nrofobj$
with the value 2. The PDFs for the new values are shown in
green.

\begin{center}
\begin{tikzpicture}
\begin{axis}[no markers,thick,
  height=5cm, width=12cm,grid,
  domain=0:4, samples=\nofsamples,
  xlabel=$x$]
 \addplot {normal(x,0.5,.25)};
 \addplot {normal(x,1.5,.25)};
 \addplot[color=green] {normal(x,2.5,.25)}; 
\legend{\mbox{\scriptsize $p_X(x\mid\loc(\robot)\in\{0,2\})$},%
        \mbox{\scriptsize $p_X(x\mid\loc(\robot)\in\{1,3\})$},%
        \mbox{\scriptsize $p_X(x\mid\loc(\robot)\in\{4\})$}}
\end{axis}
\end{tikzpicture}
\end{center}
\begin{center}
\begin{tikzpicture}
\begin{axis}[no markers,thick,
  height=5cm, width=12cm,grid,
  domain=0:3, samples=\nofsamples,
  xlabel=$w$]
  \addplot[draw=blue,domain=0:3] {gammapdf(x,1,.1)};
  \addplot[draw=red,domain=0:3] {gammapdf(x,100,0.01)};
  \addplot[draw=green,domain=0:3] {gammapdf(x,200,0.01)};
  \legend{\mbox{\scriptsize $p_W(w\mid\loaded=0)$},
    \mbox{\scriptsize $p_W(w\mid\loaded=1)$},
  \mbox{\scriptsize $p_W(w\mid\loaded=2)$}}
\end{axis}
\end{tikzpicture}

\end{center}


\end{document}